\def\eqref#1{equation~\ref{#1}}
\def\1{\bm{1}}
\DeclareMathAlphabet{\mathsfit}{\encodingdefault}{\sfdefault}{m}{sl}
\SetMathAlphabet{\mathsfit}{bold}{\encodingdefault}{\sfdefault}{bx}{n}
\let\cite\undefined 
\newtheorem{theorem}{Theorem}
\newtheorem{proposition}[theorem]{Proposition}
\theoremstyle{definition}
\newtheorem{remark}[theorem]{Remark}
\newtheorem{definition}{Definition}
\newcommand{\initial}{\alpha}
\newcommand{\trans}{\mu}
\newcommand{\final}{\tau}
\newcommand{\forward}{P}
\newcommand{\lab}{\ell}
\newcommand{\rwpe}{\text{RW}}
\newcommand{\vecop}{\operatorname{vec}}
\newcolumntype{Y}{>{\centering\arraybackslash}X}
\newcommand{\runs}[2]{\mathcal{R}_{#1,#2}}
\title{Bridging Graph Position Encodings for Transformers \\ with Weighted Graph-Walking Automata}
\author{\name Patrick Soga \email psoga@nd.edu \\
      \addr Department of Computer Science and Engineering \\
      University of Notre Dame
      \AND
      \name David Chiang \email dchiang@nd.edu \\
      \addr Department of Computer Science and Engineering \\
      University of Notre Dame}
\begin{document}

\maketitle

\begin{abstract}
A current goal in the graph neural network literature is to enable transformers to operate on graph-structured data, given their success on language and vision tasks. Since the transformer's original sinusoidal position encodings (PEs) are not applicable to graphs, recent work has focused on developing graph PEs, rooted in spectral graph theory or various spatial features of a graph. In this work, we introduce a new graph PE, Graph Automaton PE (GAPE), based on weighted graph-walking automata (a novel extension of graph-walking automata). We compare the performance of GAPE with other PE schemes on both machine translation and graph-structured tasks, and we show that it generalizes and connects with several other PEs. An additional contribution of this study is a theoretical and controlled experimental comparison of many recent PEs in graph transformers, independent of the use of edge features.
\end{abstract}

\section{Introduction}

Transformers \citep{vaswani-attention} have seen widespread use and success in recent years in deep learning, operating on numerous types of data. It is natural to also apply transformers to graph-structured data, treating the input graph's vertices as a bag of tokens. There are two important issues to be resolved for a proper formulation of a graph transformer: first, how should a graph's edges and edge features be taken into account in the transformer architecture, and second, what is an appropriate position encoding (PE) for a transformer operating on a graph? We focus on answering the latter by using a graph automaton to compute the PEs for a graph transformer.

Designing PEs for graph transformers is not a new topic in the graph neural network (GNN) literature. Broadly speaking, following the terminology used by \citet{zhang-spectral-spatial} to describe approaches in developing graph convolutional networks, PEs can be categorized as being (1) spectral or (2) spatial in nature. Spectral methods leverage the graph Laplacian, an highly useful descriptor of key structural features of the graph that are able to describe and model phenomena such as heat diffusion and electrostatic interactions. In contrast, spatial methods use local, node-level features to help the transformer differentiate nodes in the graph. Examples of these features include node degree, shortest-path distance, and self-landing probability during a random walk.

In this work, we propose GAPE (Graph Automaton PE), a PE scheme that is inspired by neither spectral nor spatial methods but rather weighted graph automata. GAPE provides a more principled approach with connections to various previous PEs while performing comparably. We show connections between GAPE and other PE schemes, including the sinusoidal encodings of \citet{vaswani-attention}, providing a more satisfying theoretical basis for its use as a generalization of PEs from strings to graphs. Further, although we do not provide experimental results, GAPE is able to give PEs for directed graphs unlike spectral methods which assume undirected graphs.

In addition, GAPE is able to provide distributed representations for use as PEs. Other graph PE schemes have different strategies for delivering a $k$-dimensional encoding; spectral methods might use the first $k$ eigenvectors of the Laplacian \citep{dwivedi-benchmarking}, while spatial methods might consider random walks of up to $k$ steps \citep{dwivedi-lspe}. These are not distributed representations, in the sense that particular dimensions of the PE of a node correspond to particular features of that node in the graph. One consequence of this is that the choice of $k$ may depend on the size of the graph -- for example, the graph might not even have $k$ eigenvectors.

To outline our contributions, GAPE (1) can theoretically and empirically simulate sinusoidal PEs and achieves nearly the same BLEU score on machine translation (MT), (2) has mathematical connections with other graph PEs including personalized PageRank \citep{pagerank} and the Laplacian encoding by \citet{dwivedi-benchmarking} under certain modifications, (3) competes with other recent PEs on several graph and node-level tasks, and (4) gives a $k$-dimensional distributed representation for any desired~$k$. In addition, while we do not provide experiments, GAPE is able to encode directed graphs, unlike spectral PE methods. Further, we provide a controlled experimental comparison of recent position encodings independent of the use of edge features.

\section{Related Work}

Spectral methods use the eigenvalues and eigenvectors of a graph's Laplacian matrix to construct a PE. Informally, a graph's Laplacian matrix can be thought to measure the ``smoothness'' of functions defined on the graph, and its eigenvectors happen to be the smoothest functions. The eigenvectors can then be interpreted as descriptors of how information propagates through the graph. For example, Laplacian eigenvectors can be used for graph clustering \citep{fiedler, Shi2000}, modeling heat diffusion \citep{coifman-diffusion}, and solving the max-flow/min-cut problem \citep{chung-spectral}. Given these traits, deriving a PE based on the eigenvalues and eigenvectors of the graph Laplacian is well-motivated.

\citet{dwivedi-gtn} draw on ideas from \citet{belkin-niyogi-lape} and use the smallest $k$ eigenvectors of the graph Laplacian according to their associated eigenvalues as PEs.  \citet{dwivedi-benchmarking} also claim that these eigenvectors are a generalization of the original Transformer's sinusoidal encodings. We test the effectiveness of these eigenvectors as PEs in an MT context in \cref{section:experiments}. \citet{kreuzer-san} take a more comprehensive approach, employing an attention-based method to produce a PE informed by potentially the entire spectrum of the graph Laplacian. However, save for augmentations by \citet{zhang-magnet}, these methods restrict the graph transformer to operating on undirected graphs. This is to ensure that each graph's Laplacian matrix is symmetric and therefore has real eigenvalues.

In contrast, spatial methods utilize local node properties such as node degree \citep{ying-graphormer}, self-landing probability during a random walk \citep{dwivedi-lspe}, and shortest path distance \citep{li-degnn, ying-graphormer, mialon-graphit}. 
These methods improve the performance of graph transformers and GNNs in general, and they are less theoretically motivated than the spectral methods, leading to specific cases where their effectiveness may be limited. For example, in regular graphs and cycles, a node's degree is not enough to uniquely identify its position. Other techniques such as those involving random walks are highly local; PEs such as one by \citet{dwivedi-lspe} require a choice of fixed neighborhood size to walk. To our knowledge, most PEs such as those currently discussed fall into the spectral or spatial categories with no overall framework connecting them.

Regarding the use of finite automata in neural networks, \citet{johnson-gfsa} develop a Graph Finite-State Automaton (GFSA) layer to learn new edge types. They treat learning the automaton's states and actions as a reinforcement learning problem and compute a policy governing when the automaton halts, returns to its initial state, or adds an edge to its output adjacency matrix. The GFSA layer can successfully navigate grid-world environments and learn static analyses of Python ASTs. A key difference between the GFSA layer and GAPE is that GAPE is primarily concerned with generating node PEs rather than learning new edge types. Further, the GFSA layer sees its most efficient use on DAGs or graphs with few initial nodes whereas GAPE is intended to be used on arbitrary graphs.

\section{GAPE: Graph Automaton Position Encoding}

\subsection{Weighted graph-walking automata}

In this section, we define a weighted version of graph-walking automata \citep{kunc+okhotin:2013}.
Graph-walking automata, as originally defined, run on undirected, node-labeled graphs; the graphs also have a distinguished initial node, and for every node, the incident edges are distinguished by labels called ``directions.'' Here, we consider directed graphs (and handle undirected graphs simply by symmetrizing them).
Our graphs do not have initial nodes.

\begin{definition}
    Let $\Sigma$ be a finite alphabet. A \emph{directed node-labeled graph} over $\Sigma$ is a tuple $G = (V, \lab, A)$, where
    \begin{itemize}
        \item $V = \{1, \ldots, n\}$ is a finite set of nodes;
        \item $\lab \colon V \rightarrow \Sigma$ maps nodes to labels;
        \item $A \in \mathbb{R}^{n \times n}$ is an adjacency matrix, that is, $A_{ij} = 1$ if there is an edge from $i$ to $j$, and $A_{ij} = 0$ otherwise.
    \end{itemize}
\end{definition}

If $\Sigma = \{1, \ldots m\}$, we can also think of $\lab$ as a matrix in $\{0,1\}^{m \times n}$ such that each column has a single 1.

\begin{definition}
    Let $\Sigma = \{1, \dots, m\}$ be a finite alphabet.
    A \emph{weighted graph-walking automaton (WGWA)} over $\Sigma$ is a tuple $M = (Q, S, \initial, \trans, \final)$, where
    \begin{itemize}
        \item $Q = \{1, \ldots, k\}$ is a finite set of states;
        \item $S \subseteq Q$ is a set of starting states;
        \item $\initial \in \mathbb{R}^{k \times m}$ is a matrix of initial weights;
        \item $\trans \colon \Sigma \to  \mathbb{R}^{k \times k}$ maps labels to matrices of transition weights;
        \item $\final \in \mathbb{R}^{k \times m}$ is a matrix of final weights. \footnote{The initial and final weights are commonly called $\lambda$ and $\rho$ (for ``left'' and ``right''), respectively, but since these letters are commonly used for eigenvalues and spectral radii, respectively, we use $\initial$ and $\final$ instead.}
    \end{itemize}
\end{definition}
If $m=1$, then for simplicity we just write $\mu$.
\begin{definition}
    Let $M$ be a WGWA and $G$ be a directed graph.
    A \emph{configuration} of $M$ on $G$ is a pair $(q, v)$, where $q \in Q$ and $v \in V$.
    A \emph{run} of $M$ on $G$ from $(q,u)$ to $(r,v)$ with weight $w$, where $q \in S$, is a sequence of configurations $(q_1, v_1), \ldots, (q_T, v_T)$, where $(q_1, v_1) = (q,u)$, $(q_T, v_T) = (r,v)$, $A_{v_t,v_{t+1}}=1$ for all $t$,
    and
    \begin{equation} \label{eq:run-weight}
        w = \initial_{q,\lab(v_1)} \left(\prod_{t=1}^{T-1} \trans_{q_t,q_{t+1}}\right) \final_{r,\lab(v)}.
    \end{equation}
\end{definition}

We can think of a graph automaton as simulating random walks with state.
At each time step, the automaton is positioned at some node and in some state.
At time $t=1$, the automaton starts at some node in state $q$ with weight $\initial_{q,\lab(v_1)}$.
Then at each time step, if the automaton is at node $u$ in state $q$,
it either moves to node $v$ and transitions to state $r$ with weight $\trans(\lab(u))_{q, r}$,
or halts with weight $\final_{q,\lab{(u)}}$.

\subsection{Position encodings}

To encode the position of node $v$ using a WGWA $M$, we use the total weight of all runs of $M$ starting in any configuration and ending in configuration $(r,v)$. 
Let $\runs{r}{v}$ be the set of all runs of $M$ on $G$ ending in configuration $(r,v)$.
Define the \emph{forward weights} $\forward \in \mathbb{R}^{k \times n}$ as
\begin{align*}
    P_{r, v} &= \sum_{(q_1,v_1), \ldots, (q_T, v_T) \in \runs{r}{v}} (\initial\ell)_{q_1, v_1} \, \trans_{q_1, q_2} \cdots \trans_{q_{T-1}, q_{T}} \\
    &= \sum_{(q_1,v_1), \ldots, (q_T, v_T) \in \runs{r}{v}} (\initial\ell)_{q_1, v_1} \prod_{t=1}^{T-1} \trans_{q_t, q_{t+1}}
\end{align*}
which is the total weight of all run \emph{prefixes} ending in configuration $(r,v)$.
We can rewrite this as
\begin{align*}
P_{r,v} &= (\initial\ell)_{r,v} + \sum_{\substack{(q_1,v_1), \ldots, (q_T, v_T) \in \runs{r}{v},\\T>1}} (\initial\ell)_{q_1, v_1} \prod_{t=1}^{T-1} \trans_{q_t, q_{t+1}} \\
&= (\initial\ell)_{r,v} + \sum_{q,u}\left(\sum_{(q_1,v_1), \ldots, (q_{T-1}, v_{T-1}) \in \runs{q}{u}} (\initial\ell)_{q_1, v_1} \prod_{t=1}^{T-2} \trans_{q_t, q_{t+1}}\right) \trans_{q,r} A_{u,v} \\
&= (\initial\ell)_{r,v} + \sum_{q,u} P_{q,u} \trans_{q,r} A_{u,v}.
\end{align*}
The matrix version of the above is
\begin{equation}
    \label{eq:gape}
    \forward = \trans^\top \forward A + \initial \lab.
\end{equation}
Setting $\final = \bm 1$, then $\text{GAPE}_M(v) = \forward_{:,v} \circ \final \lab$ (where $\circ$ is elementwise multiplication) is our PE for $v$. Notice that whether a particular node is able to be assigned a PE does not depend on $k$, unlike the random walk encoding \citep{dwivedi-lspe} or Laplacian eigenvector PE \citep{dwivedi-benchmarking}. The columns of $\forward$ can be viewed as a distributed representation in the number of states of the WGWA. It should also be noted that one could in principle replace $A$ with any weighted adjacency matrix, which we do in \cref{section:connection-pes} when connecting GAPE with other PEs. Doing so results in a minor conceptual change to GAPE, scaling the transition weights of the WGWA for each of its runs.

To solve \cref{eq:gape} for $\forward$, one may use the ``$\vecop$ trick'' \citep[p.~59--60]{matrixcookbook}:
\begin{align*}
    \vecop\forward                            & = \vecop\trans^\top \forward A + \initial \lab                \\
                                                   & = (A^\top \otimes \trans^\top) \vecop \forward + \initial \lab \\
    (I-A^\top \otimes \trans^\top) \vecop \forward & = \initial \lab
\end{align*}
where $\vecop$ flattens matrices into vectors in column-major order, and $\otimes$ is the Kronecker product. However, solving the above linear system directly for $\vecop \forward$ runs in $O(k^3n^3)$ time due to inverting $(I-A^\top \otimes \trans^\top)$, which is impractical for even small graphs given a large enough batch size. We need a more efficient method.

Note that solving for $\forward$ in \cref{eq:gape} is essentially computing the graph's stationary state, or fixed point, according to the weights of runs taken by the WGWA. A similar mathematical approach is taken by \citet{park2022convergent} and \citet{2009-gnn} who compute fixed points of graphs outside of the context of automata. Unlike these approaches which attempt to approximate the fixed point for tractability \citep{park2022convergent} or because the fixed point may not be guaranteed \citep{2009-gnn}, we compute the fixed point exactly using the Bartels-Stewart algorithm \citep{bartels-stewart}: an algorithm for solving Sylvester equations \citep[p.~111]{horn+johnson:2013}, a family of matrix equations to which \cref{eq:gape} belongs. 

Using this algorithm, \cref{eq:gape} can be solved more efficiently than using matrix inversions in $O(n^3 + k^3)$ time. To solve \cref{eq:gape}, we use an implementation of the Bartels-Stewart algorithm from SciPy \citep{scipy}. This solver is unfortunately non-differentiable, and so, unless otherwise indicated, we randomly initialize $\mu$ and $\alpha$ using orthogonal initialization \citep{saxe:2014} without learning them. 

In practice, if $x_v \in \mathbb{R}^{d}$ is the feature vector for $v$ where $d$ is the number of node features, then we use $x_v' = x_v + \text{GAPE}_M(v)$ as input to the transformer. If $k \neq d$, then we pass $\text{GAPE}_M(v)$ through a linear layer first to reshape it. Further, because, in \cref{eq:gape}, $P$ is not guaranteed to converge unless $\rho < 1$ where $\rho$ is the spectral radius of $\mu$, we scale $\mu$ by an experimentally chosen ``damping'' factor $\gamma < 1$.

\subsection{Connection with other PEs}
\label{section:connection-pes}

We move on draw connections between GAPE and other PEs.

\subsubsection{Sinusoidal encodings}

For any string $w = w_1 \cdots w_n$, we define the \emph{graph} of $w$ to be the graph with nodes $\{1, \ldots, n\}$, labeling $\lab(1) = 1$ and $\lab(i) = 2$ for $i > 1$, and an edge from $i$ to $(i+1)$ for all $i = 1, \ldots, n-1$.

\begin{proposition}
    \label{proposition:sinusoidal-pe}
    There exists a WGWA $M$ such that for any string $w$, the encodings $\text{GAPE}_M(i)$ for all nodes $i$ in the graph of $w$ are equal to the sinusoidal PEs of \citet{vaswani-attention}.
\end{proposition}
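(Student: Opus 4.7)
The plan is to prove \cref{proposition:sinusoidal-pe} by direct construction of a suitable WGWA whose state dynamics implement the rotation recurrence underlying sinusoidal PEs. Recall that Vaswani's PE at position $\mathrm{pos}$ has coordinates $\sin(\mathrm{pos}\,\omega_j)$ and $\cos(\mathrm{pos}\,\omega_j)$ for $\omega_j = 10000^{-2j/d}$ and $j=0,\dots,d/2-1$. The angle-addition identities give
\[
\begin{pmatrix}\sin((p{+}1)\omega_j)\\\cos((p{+}1)\omega_j)\end{pmatrix}
= R(\omega_j)\begin{pmatrix}\sin(p\omega_j)\\\cos(p\omega_j)\end{pmatrix},
\qquad
R(\omega)=\begin{pmatrix}\cos\omega & \sin\omega\\ -\sin\omega & \cos\omega\end{pmatrix},
\]
so the entire PE evolves as $\mathrm{PE}_{p+1}=R\,\mathrm{PE}_p$ where $R$ is block-diagonal with blocks $R(\omega_j)$.

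First, I would set $k=d$, $\Sigma=\{1,2\}$, and define $M=(Q,\initial,\trans,\final)$ by taking $\trans$ to be the block-diagonal matrix of rotations $R(\omega_j)$ above; taking $\initial_{:,1}=(0,1,0,1,\dots,0,1)^\top$ (i.e., the sinusoidal PE at $\mathrm{pos}=0$) and $\initial_{:,2}=\mathbf{0}$; and taking $\final_{:,1}=\final_{:,2}=\mathbf{1}$. Next, I would unpack \cref{eq:gape} on the graph of $w$. Since the path's adjacency matrix satisfies $(\forward A)_{:,j}=\forward_{:,j-1}$ for $j>1$ and $0$ for $j=1$, and since $(\initial\lab)_{:,j}=\initial_{:,\lab(j)}$, the matrix equation reduces to the scalar recurrence
\[
\forward_{:,1}=\initial_{:,1},\qquad \forward_{:,j}=\trans\,\forward_{:,j-1}+\initial_{:,2}\ \ (j>1).
\]
Because $A$ is nilpotent on a path, this recurrence uniquely determines $\forward$, sidestepping the usual spectral-radius condition.

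I would then prove by induction on $j$ that $\forward_{:,j}$ equals the sinusoidal PE at position $\mathrm{pos}=j-1$. The base case $j=1$ is immediate from the choice of $\initial_{:,1}$. For the inductive step, $\forward_{:,j}=\trans\,\forward_{:,j-1}+0$, which is exactly the rotation recurrence above applied blockwise, so the induction closes via the angle-addition identities. Finally, since $\final\lab$ is the all-ones matrix, $\mathrm{GAPE}_M(v)=\forward_{:,v}\circ(\final\lab)_{:,v}=\forward_{:,v}$, matching the sinusoidal PE at position $v-1$.

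I do not expect a substantive obstacle: the construction is forced by the recurrence, and the only subtle point is that the existence/uniqueness of $\forward$ must be justified without assuming $\rho(\trans)<1$ (since the rotation blocks have spectral radius exactly $1$); the justification comes from the nilpotence of the path's adjacency matrix, which ensures $I - A^\top\otimes\trans$ is invertible regardless of $\trans$. The only bookkeeping to be careful with is the index offset between the graph node $v\in\{1,\dots,n\}$ and the sinusoidal $\mathrm{pos}\in\{0,\dots,n-1\}$, and the ordering convention for sine/cosine coordinates within each block, both of which are absorbed into the choice of $\initial_{:,1}$.
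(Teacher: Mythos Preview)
Your proposal is correct and takes essentially the same approach as the paper: the identical block-diagonal rotation $\trans$, the same $\initial$ with first column $(0,1,0,1,\dots)^\top$ and zero second column, and all-ones $\final$, followed by unrolling \cref{eq:gape} on the path graph into the recurrence $\forward_{:,j}=\trans^{\,j-1}\initial_{:,1}$. You are in fact more careful than the paper on two points it glosses over---the well-definedness of $\forward$ despite $\rho(\trans)=1$ (handled via nilpotence of $A$) and the index offset between node $v$ and $\mathrm{pos}=v-1$---so nothing needs to change.
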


\begin{proof}
    If $G$ is the graph of a string, the behavior of $M$ on $G$ is similar to a unary weighted finite automaton (that is, a weighted finite automaton with a singleton input alphabet), which \citet{debenedetto+chiang:icml2020} have shown can recover the original sinusoidal PEs.
    Let
    \begin{align*}
        \initial &= 
        \begin{bmatrix}
            0 & 0 \\ 1 & 0 \\ 0 & 0 \\ 1 & 0 \\ \vdots & \vdots
        \end{bmatrix} & 
         \trans &= 
        \begin{bmatrix}
        \cos \theta_1 & \sin \theta_1 & 0 & 0 & \cdots \\
        -\sin \theta_1 & \cos \theta_1 & 0 & 0 & \cdots \\
        0 & 0 & \cos \theta_2 & \sin \theta_2 & \cdots \\
        0 & 0 & -\sin \theta_2 & \cos \theta_2 & \cdots \\
        \vdots & \vdots & \vdots & \vdots & \ddots
        \end{bmatrix} &
        \final &= \begin{bmatrix}
            1 & 1 \\ 1 & 1 \\ 1 & 1 \\ 1 & 1 \\ \vdots & \vdots
        \end{bmatrix}
    \end{align*}
where $\theta_j = -10000^{-2(j-1)/k}$.
Then the PE for node $i$ is $(\initial_{:,1})^\top \trans^i$, which can easily be checked to be equal to the original sinusoidal PEs.
\end{proof}

To verify the above proposition, we ran an MT experiment benchmarking several graph PE schemes and compared their performance with GAPE using the open-source Transformer implementation Witwicky,\footnote{https://github.com/tnq177/witwicky} with default settings. Results and a complete experimental description are below in \cref{section:experiments}.

\subsubsection{Laplacian eigenvector encodings}
\label{subsection:lape}
Next, we turn to connect GAPE and LAPE \citep{dwivedi-benchmarking}, which is defined as follows.

\begin{definition}    
    Define the graph Laplacian to be $L = D - A$ where $D_{vv}$ is the degree of node $v$. Let $V$ be the matrix whose columns are some permutation of the eigenvectors of $L$, that is,
    \[ L V = V \Lambda \]
    where $\Lambda$ is the diagonal matrix of eigenvalues. Then if $k \le n$, define
\[ \text{LAPE}(v) = V_{v,1:k}. \]
\end{definition}
\begin{remark}
If we assume an undirected graph and use the Laplacian matrix $L$ in place of $A$, then there is a WGWA $M$ with $n$ states that computes LAPE encodings.
Namely, let $m = 1$, $\initial = \mathbf{0}$,
let $\trans$ be constrained to be diagonal,
and let $\final_{q,1} = 1$ for all $q$. Modifying \cref{eq:gape}, we are left with solving $P = \trans P L$ for $P$.

But observe that $\trans = \Lambda^{-1}$ and $P = V^\top$ are a solution to this equation, in which case $\text{GAPE}_M(v) = \text{LAPE}(v)$.
\end{remark}

While the above is true, some caveats are in order. First, the choice of $\trans$ does depend on $L$ and therefore the input graph.
Second, the theoretical equivalence of GAPE and LAPE requires using the graph Laplacian in place of $A$, which is less interpretable in an automaton setting.
Third, since we require $\alpha = \bm 0$, then the system $(I-A^\top \otimes \trans) \vecop \forward = \initial \lab$ is no longer guaranteed a unique solution, and so there are also trivial solutions to $\forward = \trans \forward L$ that need to be removed from consideration when solving for $\forward$. The connection between GAPE and LAPE is therefore largely formal.
 
\subsubsection{Personalized PageRank \& random-walk encoding}
Next, we discuss a connection between GAPE and Personalized PageRank (PPR) and the random-walk PE (RW) used by \citet{dwivedi-lspe}, which is a simplification of work by \citet{li-degnn}.

As described by \citet{zhang-ppr}, the $v$th entry of the PPR vector $\text{PPR}(u)$ of a node $u$ is the probability that a random walk from $u$ reaches $v$ with some damping value $\beta \in [0, 1]$. That is, at each time step, the random walker either restarts with probability $\beta$ or continues to a random neighbor with probability $1-\beta$. $\text{PPR}(u)$ can therefore be interpreted as a PE for $u$ where $\text{PPR}(u)_v$ measures the relative importance of $v$ with respect to $u$.
Formally, we define
$$\text{PPR}(u) = \beta e_u + (1-\beta)\pi_u W$$
where $e_u$ is a one-hot vector for $u$ and $W = AD^{-1}$ where $D$ is the degree matrix of the graph.
In matrix form, the PPR matrix $\Pi$ is
\begin{equation}\label{eq:ppr}
    \Pi = \beta I + (1-\beta)\Pi W.
\end{equation}
Written this way, we see a similarity between \cref{eq:ppr} and \cref{eq:gape}.

\begin{remark}
    If we replace $A$ with $W$, there is a WGWA $M$ with $n$ states that computes $\Pi$. Namely, replace $A$ with $W$ and set $k = m = n$, $\trans = (1-\beta)I$, and $\initial \ell = \beta I$. Then $\text{GAPE}_M(u) = \text{PPR}(u)$ for all nodes $u$.
\end{remark}

The choice to replace $A$ with $W$ results in dividing the automaton's transition weights during each run according to each traversed node's out-degree. Under this modification and the substitutions in the above remark, GAPE nearly generalizes PPR, minorly differing in how transition weights are computed.

Next, we draw a connection between GAPE and RW bridged by PPR.
\begin{definition}
    Let $u$ be a node in the graph. Then $\rwpe(u)$ is defined by
    \begin{equation*} \label{eq:li-rw}
        \rwpe(u) = \begin{bmatrix} W_{uu}, (W^2)_{uu}, \dots, (W^k)_{uu} \end{bmatrix} \in \mathbb{R}^k.
    \end{equation*}
    for some experimentally chosen $k$.
\end{definition}

 In other words, $\rwpe(u)_i$ is the probability that a random walk of length $i$ starting at $u$ returns to $u$. Now, \cref{eq:ppr} can be seen as a way of calculating something analogous to RW. Since $\Pi_{u, v}$ is the probability of a node $u$ landing on node $v$ during an infinitely long random walk, then $\Pi_{u, u}$ is the self-landing probability of a random walker starting at node $u$, which captures similar information as $\rwpe(u)$. The only difference is that $\rwpe(u)$ is determined by random walks of a fixed length. It turns out we can recover very similar encodings as $\rwpe(u)$ by taking the diagonals of the PPR matrix for successive powers of $W$. That is, if $\Pi^{(i)}$ is the PPR matrix using $W^i$, then we can define the PPR Power (PPRP) encoding of a node $u$ by
$$\text{PPRP}(u) = \begin{bmatrix} \Pi^{(1)}_{uu}, (\Pi^{(2)})_{uu}, \dots, (\Pi^{(k)})_{uu} \end{bmatrix} \in \mathbb{R}^k.$$

Intuitively, $\text{PPRP}(u)_i$ is the self-landing probability of a random walker during an infinitely long random walk with a step size of $i$.
Interestingly, while PPRP and RW are not equal, their relative values are nearly identical, and they give nearly identical performance on certain graph tasks as depicted in \cref{fig:rw-ppr} and \cref{table:rw-ppr-graph}, respectively. With this, the same postprocessing steps to construct RW can be applied to GAPE through GAPE's connection with PPR to construct PPRP, a PE that is analogous and empirically equivalent to RW.

\begin{figure}    \includegraphics[width=.48\textwidth]{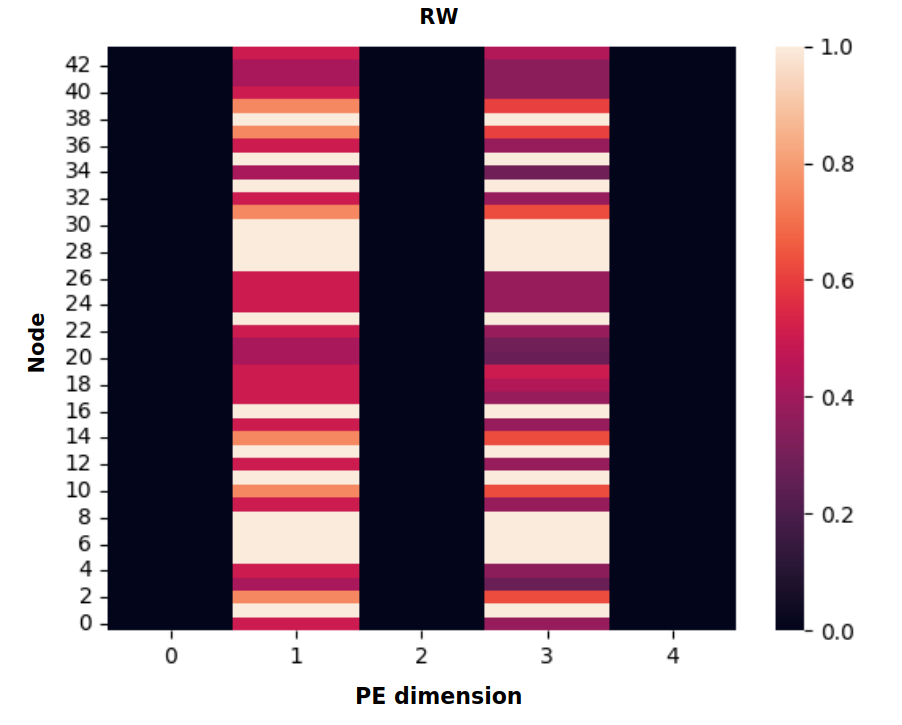}\hfill
    \includegraphics[width=.5\textwidth]{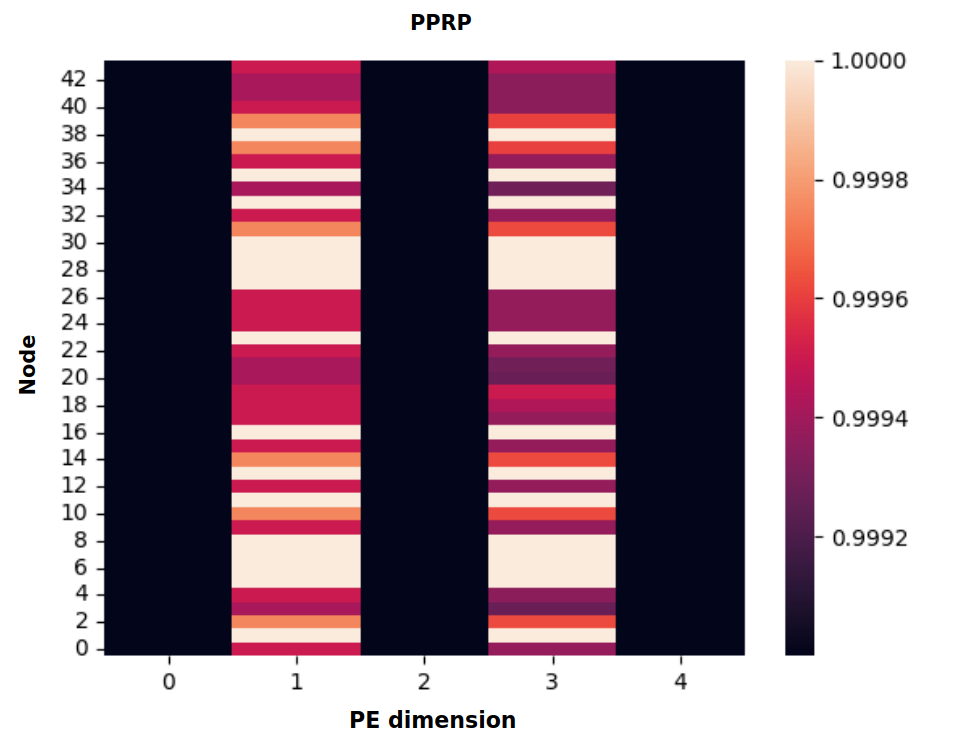}\hfill
    \caption{Comparison of RW with PPRP where $k = 5$ and $\beta = 0.999$ for a graph in CYCLES. While PPRP is not calculating self-landing probabilities, the relative differences are the same.}
    \label{fig:rw-ppr}
\end{figure}

\begin{table}
    \caption{RW vs PPRP with min-max normalization to bring both PEs to similar scales. Dataset descriptions and experimental details regading the transformer and its hyperparameters are in \cref{section:experiments}.}
        \label{table:rw-ppr-graph}
        \centering
        \begin{tabular}{lccc}
        PE scheme & CYCLES $(\uparrow)$& ZINC $(\downarrow)$ \\
        \cmidrule(lr){1-3}
        RW          &   99.95 &     0.207    \\
        PPR         &   \textbf{100.00} &  \textbf{0.198}\\
        \cmidrule(lr){1-3}
    \end{tabular}
\end{table}

\section{Experiments} \label{section:experiments}

In this section, we compare GAPE experimentally to the following graph PEs:
\begin{description}
    \itemsep0em
    \item[RW] Random Walk \citep{dwivedi-lspe, li-degnn}
    \item[LAPE] LAplacian PE \citep{dwivedi-benchmarking, dwivedi-gtn}
    \item[SA] Spectral Attention \citep{kreuzer-san}
    \item[SPD+C] Shortest Path Distance + Centrality \citep{ying-graphormer}
\end{description}

\citet{ying-graphormer} use ``Centrality'' to refer to node degree. We compare all of the graph PEs with GAPE in two kinds of tasks: MT and graph and node-level tasks.

\subsection{Machine translation}
\label{sec:mt_experiments}

Our first experiment tests how well these various non-sinusoidal graph PEs perform on MT from English to Vietnamese. To test GAPE, we use the adjacency matrix of the directed path graph of $n$ vertices, where $n$ is the number of tokens in the input sequence, and use $k = 512$. We also initialize $\initial_{:,1}$ with a normal initialization and set all other columns of $\initial$ to 0 in order to force the WGWA to start walking the path graph from its first node.

For LAPE, we experiment with two different graphs: the path graph of $n$ vertices and the cycle graph of $n$ vertices. We use the path graph to explore the suggestion that the Laplacian eigenvectors of the path graph are a natural generalization of the sinusoidal encodings \citep{dwivedi-benchmarking}. We also experiment with the cycle graph because its Laplacian eigenvectors form the columns of the DFT matrix \citep{davis-circulant} which \citet{bronstein-geodl} note to share a similar structure with Transformer's original sinusoidal encodings. LAPE with the path graph is denoted by $\text{LAPE-P}$ while LAPE with the cycle graph is denoted by $\text{LAPE-C}$. For SA, we use the cycle graph.

However, the sinusoidal encoding for a node $u$ in a sequence does not depend on the length of the sequence, but LAPE for a node $u$ does. So, to try to obtain a better fit to sinusoidal encodings, we try an additional variant of LAPE, which we call $\text{LAPE-C}_{10\text{K}}$, where the cycle graph is fixed with $10,000$ vertices in which the path graph representing the input sentence can be embedded. 

For RW, we use the path graph of 1024 vertices (the transformer's maximum input length) with a $20$-hop neighborhood. SPD+C uses the same path graph as RW and uses a shortest path distance embedding size of 256 and a degree embedding size of 2 since any node in the path graph can have a maximum degree of 2.

The results are shown in the first column of \cref{table:graph-performance}.
We see that GAPE achieves a higher BLEU score than every other graph PE and very nearly matches the original sinusoidal PEs of \citet{vaswani-attention},
giving empirical evidence confirming \cref{proposition:sinusoidal-pe} and establishing GAPE as a generalization of sinusoidal PEs. Further, despite previous speculations on the connection between eigenvectors of the cycle graph Laplacian and the sinusoidal encodings, LAPE-P, LAPE-C, LAPE-C$_{\text{10K}}$, and SA underperform. While the eigenvectors of the path and cycle graph Laplacians are indeed sinusoidal, it appears they are unable to recover the performance of the original transformer's PE in practice.

Surprisingly, SPD+C performs poorly despite its shortest path distance encoding being closely related to the relative PE by \citet{shaw-etal-2018-self, Huang2019MusicTG}. However, it makes sense that the degree encoding is of little use since there are only two degree types in the path graph: one to denote the beginning and end of the sentence, and another to represent the positions of all other tokens. Under the degree encoding, no two tokens' positions between the start and end tokens are distinguishable.

\subsection{Graph- and node-level tasks}
\label{sec:other_experiments}

\subsubsection{Datasets}
\label{subsection:datasets}

We additionally compare performance of GAPE with the other PEs on 7 undirected graph tasks. Below, we give descriptions of each dataset, their importance, and experimental details.

\textbf{ZINC} \citep{irwin-zinc} is graph regression dataset with the task of predicting the solubility of various molecules. Given the wide applications of GNNs on molecular data, we thought to use ZINC as it is one of the most popular molecular graph datasets used in the literature. For consistency with the literature on transformer PEs, we use the 12K subset of the full 500K dataset used by \citet{dwivedi-benchmarking} and \citet{kreuzer-san} and use a batch size of 128.

\textbf{CSL} \citep{murphy-relpool} is a graph classification dataset of circular skip link (CSL) graphs. Each CSL graph is 4-regular with $n$ nodes in a cycle and skip links of length $k$ which define isomorphism classes to distinguish. We use the same dataset as \citet{murphy-relpool} and follow \citet{chen-iso} choosing $n = 41$ and $k = 10$. \citet{murphy-relpool} and \citet{xu2018how} show that the 1-dimensional Weisfeiler-Lehman isomorphism test (1-WL test) \citep{weisfeiler-lehman} is unable to distinguish between isomorphism classes of CSL graphs and that message-passing GNNs are at most as powerful as the 1-WL test. So, a PE that succeeds on this dataset can be seen as enhancing the expressive power of its host GNN. There are 150 graphs total, and following \citet{murphy-relpool}, we perform a 5-fold cross validation split with 5 sets of train, validation, and test data and report the average accuracy. We use a batch size of 5.

\textbf{CYCLES} \citep{murphy-relpool, loukas-what} is a cycle detection dataset also meant to test the theoretical power of a given GNN. \citet{loukas-what} showed the minimum necessary width and depth of a message-passing GNN to detect 6-cycles, and in this task, we are interested in the additive power of each PE in detecting 6-cycles. We use the same train, test, and validation splits as \citet{loukas-what}, using 200 graphs for training, 1000 graphs for validation and 10,000 graphs for test with a batch size of 25. 

\textbf{CYCLES-V} is also a cycle detection dataset, except with varying cycle lengths. Each graph is a copied from CYCLES, and nodes are added to lengthen cycles and maintain similar graph diameters. Whereas positive samples of CYCLES only contains 6-cycles, positive samples of CYCLES-V contain cycles of length ranging from 6 to 15. We made this adjustment to CYCLES because we wanted to test the generalizability of RW seeing as the choice of random-walk neighborhood size crucially affects its performance; clearly, RW will be able to detect cycles of length $k$ if its neighborhood size is at least $k$. We use the same train, validation, and test splits and batch size as CYCLES.

\textbf{PATTERN and CLUSTER} \citep{dwivedi-benchmarking, abbe-sbm} are node classification datasets generated using the Stochastic Block Model \citep{abbe-sbm}, which is used to model communities within social networks according to certain hyperparameters such as community membership probability. PATTERN has 2 node classes task while CLUSTER has 7. We use the same splits as \citet{dwivedi-benchmarking} with 10,000 graphs for train and 2,000 graphs for validation and test. We use a batch size of 26 and 32 for PATTERN and CLUSTER, respectively.

\textbf{PLANAR} is a new dataset we generate to test the abilities of each PE to help the transformer correctly classify whether a given graph has a planar embedding. We introduce this task as another test of the theoretical power of each PE. For many graphs, non-planarity can be detected by merely counting nodes since a graph is not planar if $|E| > 3|V| - 6$ by Euler's formula. However, exactly determining whether a graph is non-planar requires checking whether there are subgraphs homeomorphic to the complete graph $K_5$ or the utility graph $K_{3,3}$ \citep{Kuratowski1930, wagner}. We use 7,000 graphs for training and 1,500 for validation and testing. On average, each graph has 33 nodes. We use a batch size of 32.

\textbf{PCQM4Mv2} is a dataset from the Open Graph Benchmark Large-Scale Challenge (OGB-LSC) \citep{ogblsc}, a series of datsets intended to test the real-world performance of graph ML models on large-scale graph tasks. PCQM4Mv2 is a graph regression dataset with the task of predicting the HOMO-LUMO energy gap of molecules, an important quantum property. As labels for the test set are private, we report performance on the validation set. The train set is comprised of 3,378,606 molecules while the validation set has 73,545 molecules. We use a batch size of $1024$. Due to time and hardware constraints, we report metrics based on the 80th epoch during training.

For GAPE, on MT, we used PE dimension $k = 512$ and a damping factor $\gamma = 1$. On the graph-level tasks, we use $k=32$ and $\gamma = 0.02$. For RW, we choose $k= 20$ on all tasks except for CYCLES-V, where we chose $k=9$ in order to test its generalizability to longer cycle lengths. For LAPE, we use $k=20$ on all tasks, following \citet{dwivedi-gtn}. For SA, we use $k=10$ on all tasks, following \citet{kreuzer-san}. 

\subsubsection{Experimental setup}

We use the graph transformer from \citet{dwivedi-gtn} for our backbone transformer, as it is a faithful implementation of the original Transformer model capable of working on graphs. All models for the graph tasks were subject to a parameter budget of around 500,000 as strictly as possible similar to \citet{dwivedi-gtn} and \citet{kreuzer-san}. Since there is little consensus on how to incorporate edges and edge features into a graph transformer, we omit the use of edge features in all of our tests for a fair comparison, only using node features. 

Across all tasks, we use the Adam \citep{kingma-adam} optimizer. For nearly all tasks, we use 10 layers in the graph transformer, 80 node feature dimensions, 8 attention heads, learning rate of 0.005, reduce factor of 0.5, and patience of 10. For ZINC, we follow \citet{dwivedi-benchmarking} and use a learning rate of 0.007 and patience of 15. CSL is such a small dataset that we opted to shrink the number of transformer layers down to 6 and maintain a parameter budget of around 300,000 for greater training speed and to avoid overparametrization.

SA uses a transformer encoder to compute its PE. For this transformer, we use 1 attention layer, 4 attention heads, and 8 feature dimensions in order to respect the 500,000 parameter budget. For CSL, we reduced the number of attention heads to 1 to respect the 300,000 parameter budget.

For SPD+C, we vary the size of the node degree and shortest-path embeddings according to each dataset since each dataset contains graphs of varying sizes. Degree embedding sizes ranged from 64 to 256, and the shortest-path embedding sizes ranged from 64 to 512.

\cref{table:graph-performance} shows performance on graph datasets using neighborhood-level attention. \textbf{Bold} numbers indicate the best score of that column up to statistical significance (t-test, $p$=0.05). For MT, best metrics are indicated considering only the non-sinusoidal encodings.

For the graph tasks, Baseline refers to the absence of any PE. For MT, Baseline refers to the original sinusoidal encodings by \citet{vaswani-attention}. For each dataset, we take the average of 4 runs each conducted with a different random seed. We report metrics from each dataset's test set based on the highest achieved metric on the corresponding validation set.

\subsubsection{GAPE variants}

We also try several variations of GAPE, with the following normalization and initial weight vector selection strategies. 
\begin{description}
    \itemsep0em
    \item[$\text{GAPE}^\ast$] Row-wise softmax on $\mu$ without damping factor $\gamma$.
    \item[$\text{GAPE}^{\ast\ast}$] Same as $\text{GAPE}^\ast$ but with a column-wise softmax on $\alpha$.
    \item[$\text{GAPE}^*_{20}$, $\text{GAPE}^{**}_{20}$] Like $\text{GAPE}^*$ and $\text{GAPE}^{**}$, respectively, but use $m = 20$ node labels and initialize each initial weight vector $\alpha_{:,\ell}$ to a different random vector for each label $\ell$.
    \item[$\text{GAPE}^{**}_{\text{max}}$] Like $\text{GAPE}^{**}$, but use a different node label for each node ($m = n$) and initialize each initial weight vector to a different random vector.
\end{description}

The rationale behind $\ast$ versions of GAPE is that it is counter-intuitive for the transition weights of an automaton to be negative, and so normalizing the rows of $\mu$ should give a more probabilistic interpretation of the WGWA's traversal of the graph. The rationale behind $\ast\ast$ is to normalize the columns of $\alpha$ according the PPR understanding of GAPE; teleportation probabilities cannot be negative.

Finally, recall that $\alpha$, the matrix of initial weights from \cref{eq:gape}, assigns different initial weights to each node label. While our base variation of GAPE uses only one node label, we also tried using $m=20$ labels. The hope is that varying the initial weights can help GAPE learn on graphs with high levels of symmetry, such as those from CSL. By assigning each node a different initial weight vector, the automaton should be forced to compute different weights for nodes with similarly structured neighborhoods. We also experiment with giving every node a unique initial weight vector by letting $m=N$, where $N$ is larger than the size $n$ of any graph in the given dataset, and giving every node a different node label.

\begin{table}
    \caption{Results on MT and Graph Tasks. MT is measured by BLEU score, ZINC and PCQM4Mv2 are measured by mean absolute error (MAE), and the rest are measured by accuracy. OOM stands for Out Of Memory.}
    \centering
    \label{table:graph-performance}
    \centering
    \setlength{\tabcolsep}{5pt}
    \resizebox{\textwidth}{!}{
        \begin{tabular}{lccccccccc}
        \toprule
        PE scheme                 & MT ($\uparrow$) & ZINC ($\downarrow$) & CSL ($\uparrow$) & CYCLES ($\uparrow$) & PATTERN ($\uparrow$) & \multicolumn{1}{l}{CLUSTER ($\uparrow$)} & PLANAR ($\uparrow$) & CYCLES-V ($\uparrow$) & PCQM4Mv2 ($\downarrow$) \\
        \midrule
        Baseline                  & 32.6  & 0.357 & 0.10 & 50.00 & 83.91 & 71.77 & 50.00 & 73.31 & 0.123\\
        \midrule
        LAPE-P              & 17.3 & - & - & - & - & - & - & - & - \\
        LAPE-C              & 17.4  & - & - & - & - & - & - & - & - \\
        $\text{LAPE-C}_{10\text{K}}$      & 16.4 & - & - & - & - & - & - & - & - \\
        LAPE                   & - & 0.311 & \textbf{100.00} & 97.08 & 85.05 & 72.14 & 96.41 & 88.53 & 0.120\\
        SA                      & 16.9 & 0.248 & 87.67 & 89.52 & 83.61 & 72.84 & 97.45 & 81.11 & 0.158 \\
        RW                      & 20.8 & \textbf{0.207} & 93.50 & \textbf{99.95} & \textbf{86.07} & 72.32 & \textbf{98.50} & 90.97 & \textbf{0.116} \\
        SPD+C                   & 0.0 & 0.263 & 10.00 & 88.44 & 83.09 & 70.57 & 96.00 & 83.13 & OOM \\
        \midrule
        GAPE  & \textbf{32.5} & 0.251 & 10.00 & 79.03 & \textbf{85.99} & {73.40} & 95.97 & 83.86 & 0.122\\
        GAPE$^\ast$  & - & 2.863 & 10.00 & 81.39 & 84.82 & \textbf{74.03} & 64.35 & 90.68 & 0.313 \\
        GAPE$_{20}^\ast$  & - & 1.339 & 48.67 & 84.66 & 69.95 & 70.57 & 75.12 & 89.01 & 0.556 \\
        GAPE$^{\ast\ast}$  & - & 2.067 & 10.00 & 83.42 & 83.46 & 72.39 & 63.38 & 91.76 & 0.914\\
        GAPE$_{20}^{\ast\ast}$  & - & 0.837  & 58.00  & 85.43 & 81.53 & 72.06 & 70.48 & \textbf{95.12} & 0.440 \\
        GAPE$_{\text{max}}^{\ast\ast}$  & - & 0.539  & \textbf{100.00}  & 86.01 & 83.53 & 72.40 & 71.87 & 95.05 & 0.319\\
        \bottomrule
        \end{tabular}
    }
\end{table}

\begin{table}
    \caption{Runtime for computing PEs ($k=20$) in seconds. Results are averaged across 4 runs on an Ubuntu 22.04 LTS desktop equipped with an AMD Ryzen 7 3700X 8-core CPU and 32GB DDR4 RAM.}
    \centering
    \label{table:graph-runtime}
    \centering
        \begin{tabular}{lccc}
            \toprule
            PE scheme            & ZINC ($\downarrow$) & CYCLES ($\downarrow$) & PATTERN ($\downarrow$) \\
            \midrule
            LAPE                 & 13.79 & 28.59 & 886.46 \\
            RW                   & 28.93 & 49.61 & \textbf{446.80}  \\
            GAPE                 & \textbf{8.34} & \textbf{19.40} & 769.21 \\
            \bottomrule
        \end{tabular}
\end{table}

\subsubsection{Discussion}

The graph tasks reveal the shortcomings of the vanilla transformer. Without a PE, it is unable to adequately distinguish non-isomorphic graphs in CSL, and also struggles to detect 6-cycles and planar/non-planar graphs. All other PEs are able to improve on the baseline transformer, except on the CSL task where most PEs do not achieve 100\% accuracy.

It is interesting that SA underperforms relative to LAPE on many graph tasks given how its approach positions it as an improvement on LAPE and performs better than LAPE when incorporated in the spectral attention graph transformer devised by \citet{kreuzer-san}. While the encoder used to create the PE may need more parameters, a more likely explanation is that performance suffers from a lack of incorporating edge embeddings, which the spectral attention graph transformer assumes but we omit for a fairer comparison of the node PEs alone.

We think it is natural to see that RW performs nearly perfectly on finding 6-cycles since a $20$-hop neighborhood is enough to uniquely encode nodes in those 6-cycles, but performance begins to fall on CYCLES-V as expected as a 9-hop neighborhood is unable to provide a unique encoding to nodes in cycles of length greater than 9.

GAPE's ($\gamma = 0.02$) performance surpasses other PEs only on PATTERN and CLUSTER, with the other variants showing more competitive performance in other datasets. We give a discussion on the various factors causing these trends in the following subsections.

\cref{table:graph-runtime} compares the runtime for computing LAPE, RW, and GAPE according to three different sized datasets. ZINC, CYCLES, and PATTERN are increasing in average graph size. ZINC has 12K graphs with 23.16 nodes on average; CYCLES has 11.2K graphs with 48.96 nodes on average; PATTERN has 12K graphs with 118.89 nodes on average. 

We see that GAPE is faster to compute for smaller graphs, but RW overtakes both LAPE and GAPE for large graphs. This is likely because RW is computable via matrix powers while an eigendecomposition for LAPE and use of the Bartels-Stewart algorithm for GAPE are more computationally demanding. Despite GAPE's theoretical time complexity being at least as large as that of LAPE, we suspect that GAPE's advantage regarding smaller graphs is due to efficiencies in the Sylvester equation solver.

\begin{figure}
    \centering
    \includegraphics[width=.5\textwidth]{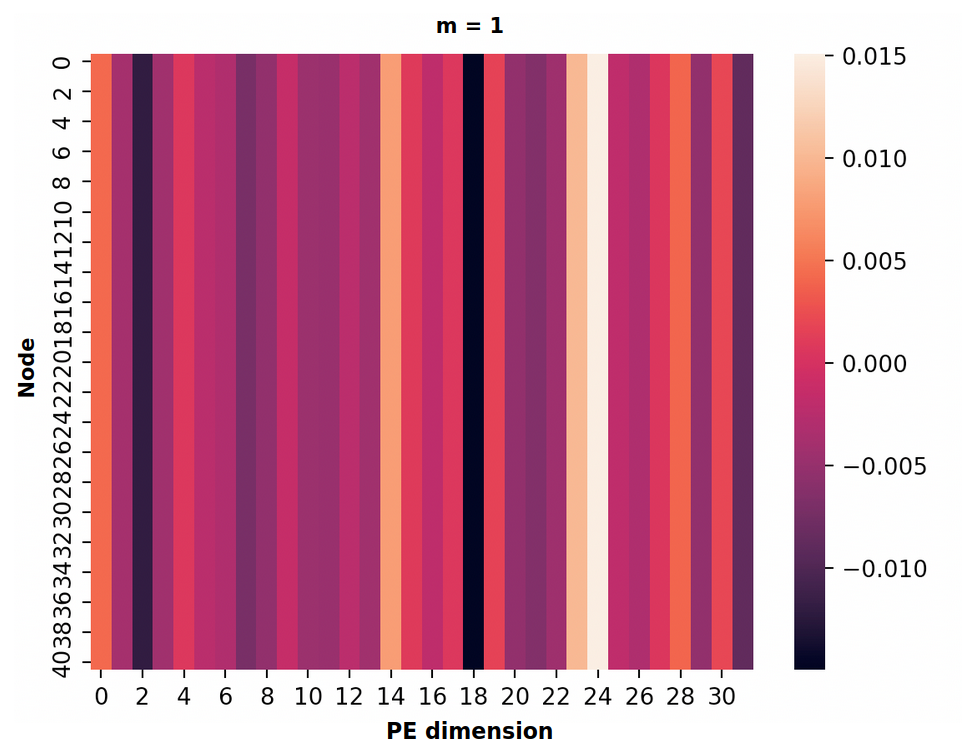}\hfill
    \includegraphics[width=.49\textwidth]{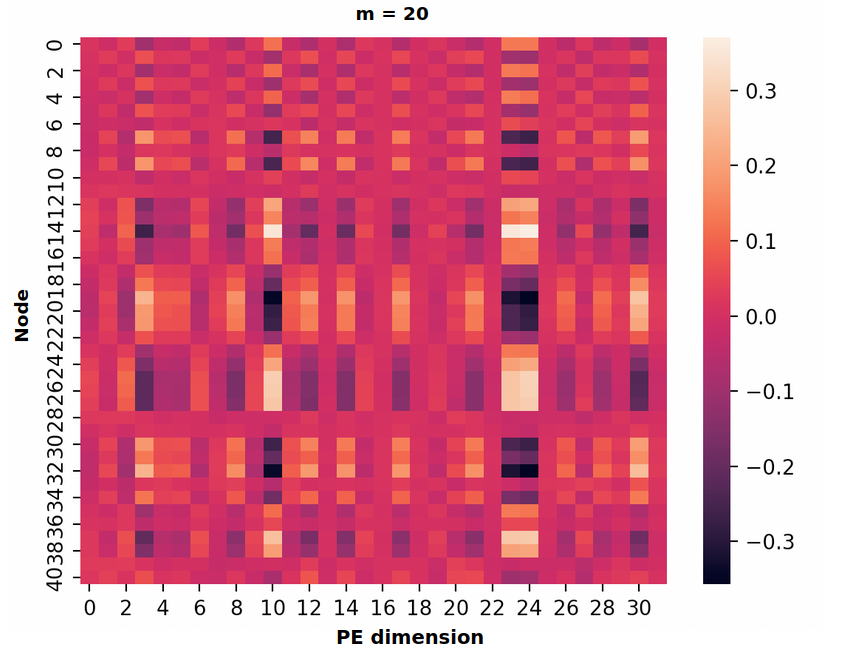}\hfill
    \caption{Comparing GAPE with varying number of node labels $m$. Left is GAPE with $m = 1$, and right is GAPE with $m = 20$. Nodes are ordered according to the order in which they were inserted into the graph when CSL was constructed.}
    \label{fig:csl-kinds}
\end{figure}

\subsubsection{Effect of number of states on performance}
    
One thought for improving performance of GAPE on the graph tasks is to increase the number of states used in the WGWA on which it is based. However, we observed no positive trend in performance as we increase the number of states. For most tasks, increasing the number of states can even worsen performance. \cref{table:k-study} shows performance on ZINC, CYCLES, CLUSTER, and PATTERN as the number of states $k$ varies using the base GAPE method in \cref{table:graph-performance} with damping factor $\gamma = 0.02$. One thought is that adding states to the automaton adds risk of instability; if the GAPE encoding of a node $u$ is the total weight of all runs from any initial configuration to $(u, r)$ for all possible $r$, then adding states means adding more possible runs which may lead to weights that are too large, depending on the initialization of $\mu$.

\subsubsection{Effect of number of initial weight vectors}
Another thought is to consider how performance varies as the number of node labels $m$ increases. We see in \cref{table:graph-performance} that increasing $m$ from 1 to 20 grants noticeable but not very large performance gains on nearly all datasets except for CSL. We think this is because increasing $m$ forces GAPE to assign different initial weights for runs starting at otherwise seemingly identical nodes. In other words, increasing $m$ is akin to conditioning the automaton's starting weights on distinct node labels on the graph. Such a labeling allows GAPE to distinguish between nodes with the same neighbors and nodes with neighbors with the same features, aiding in classifying highly symmetric graphs like CSL graphs. This explanation applies to the cycle detection datasets as well as increasing $m$ presents up to a 3.36\% increase in accuracy on CYCLES-V.

We further note that when assigning every node a unique initial weight vector in a consistent fashion as in GAPE$^{\ast\ast}_\text{max}$, we see further improved performance on many of these highly symmetric graph datasets, most notably achieving 100\% on CSL. However, on none of these GAPE variants where $m > 1$ do we see a performance gain on PATTERN, CLUSTER, or PLANAR; we instead see a significant amount of overfitting. An explanation is that increasing $m$ may encourage the transformer to rely too heavily on each node's initial weight vectors to memorize node positions, leading to poor generalizability on graphs that do not have a highly symmetric structure. In other words, increasing $m$ strengthens the transformer's notion of node ``identity'' which is useful for distinguishing nodes in symmetric substructures like atom rings and cycles but less useful for tasks where nodes are part of a variety of less symmetric substructures.

\cref{fig:csl-kinds} illustrates more clearly the effects of increasing $m$ for CSL graphs. We see that setting $m=1$ results in assigning the same PE to every node in the CSL graph, preventing the transformer from learning the task. In contrast, setting $m=20$ diversifies the PE, allowing the graph transformer to distinguish between identical neighbors and neighbors with identical features. 

\begin{table}
    \caption{GAPE Performance as $k$ varies}
    \label{table:k-study}
    \centering
    \begin{tabular}{lcccccc}
        \toprule
        $k$                 & ZINC $(\downarrow)$ & CYCLES $(\uparrow)$ & CLUSTER $(\uparrow)$ & PATTERN $(\uparrow)$\\
        \cmidrule(lr){1-5}
        8 & 0.281 & 79.70 & 72.41 & 85.76\\
        16 & 0.279 & \textbf{86.80} & 72.55 & 78.24 \\
        32 & \textbf{0.251} & 79.03  & \textbf{73.40} & \textbf{85.99}\\
        64 & 0.278 & 80.36 & 69.90 & 75.52\\
        128 & 0.269 & 82.28 & 72.14 & 79.49 \\
        \cmidrule(lr){1-5}
    \end{tabular}
\end{table}

\section{Limitations and Conclusion}

While GAPE and its variants are competitive on certain datasets, they struggle to outperform the other PEs on datasets like CSL, CYCLES, and CYCLES-V without setting $m > 1$. Further, even with $m > 1$, GAPE's performance worsens on other datasets. Improvements on GAPE should seek to improve performance on data involving highly symmetrical structures like cycles, and more work needs to be done to ensure GAPE is able to retain performance even as $m$ increases. As it stands, it seems doubtful that a transformer equipped with GAPE with $m = 1$ should be more powerful than the 1-WL test, and there is no single variant of GAPE that results in better performance overall.

Another limitation of GAPE is the computational difficulty of solving for $\forward$. Using the ``vec trick'' allows us to learn $\mu$ and $\alpha$, but runs far too slow and consumes too much memory. We run out of memory when testing PATTERN with a batch size of 26 and $k=32$, and shrinking the batch size to 18 and number of states to $k=10$ results in a single epoch taking over an 1.25 hours, making learning $\mu$ and $\alpha$ impractical with this method. Keeping $\mu$ and $\alpha$ random means giving up possible performance gains and stability. We leave alleviating limitations to future work.

To sum up, GAPE provides a more theoretically principled framework for developing graph PEs. It is a generalization of the original sinusoidal encodings on strings, and it has connections with personalized PageRank and LAPE under certain modifications. With this, GAPE can be seen as a step towards bridging the spatial and spectral frameworks of graph PEs under graph automata.

\section*{Acknowledgements}

We thank the anonymous reviewers for their valuable comments.
This material is based upon work supported by the US National Science Foundation under Grant Nos.~IIS-2137396 and IIS-2146761.

\bibliography{main}

\begin{thebibliography}{42}
\providecommand{\natexlab}[1]{#1}
\providecommand{\url}[1]{\texttt{#1}}
\expandafter\ifx\csname urlstyle\endcsname\relax
  \providecommand{\doi}[1]{doi: #1}\else
  \providecommand{\doi}{doi: \begingroup \urlstyle{rm}\Url}\fi

\bibitem[Abbe(2018)]{abbe-sbm}
Emmanuel Abbe.
\newblock Community detection and stochastic block models: Recent developments.
\newblock \emph{Journal of Machine Learning Research}, 18\penalty0
  (177):\penalty0 1--86, 2018.
\newblock URL \url{http://jmlr.org/papers/v18/16-480.html}.

\bibitem[Bartels \& Stewart(1972)Bartels and Stewart]{bartels-stewart}
R.~H. Bartels and G.~W. Stewart.
\newblock Solution of the matrix equation {AX + XB = C} [{F4}].
\newblock \emph{Communications of the ACM}, 15\penalty0 (9):\penalty0 820--826,
  1972.
\newblock \doi{10.1145/361573.361582}.

\bibitem[Belkin \& Niyogi(2003)Belkin and Niyogi]{belkin-niyogi-lape}
Mikhail Belkin and Partha Niyogi.
\newblock Laplacian eigenmaps for dimensionality reduction and data
  representation.
\newblock \emph{Neural Computation}, 15\penalty0 (6):\penalty0 1373--1396,
  2003.
\newblock \doi{10.1162/089976603321780317}.

\bibitem[Bronstein et~al.(2021)Bronstein, Bruna, Cohen, and
  Velickovic]{bronstein-geodl}
Michael~M. Bronstein, Joan Bruna, Taco Cohen, and Petar Velickovic.
\newblock Geometric deep learning: Grids, groups, graphs, geodesics, and
  gauges.
\newblock \emph{CoRR}, abs/2104.13478, 2021.
\newblock URL \url{https://arxiv.org/abs/2104.13478}.

\bibitem[Chen et~al.(2019)Chen, Villar, Chen, and Bruna]{chen-iso}
Zhengdao Chen, Soledad Villar, Lei Chen, and Joan Bruna.
\newblock On the equivalence between graph isomorphism testing and function
  approximation with {GNN}s.
\newblock In H.~Wallach, H.~Larochelle, A.~Beygelzimer, F.~d\textquotesingle
  Alch\'{e}-Buc, E.~Fox, and R.~Garnett (eds.), \emph{Advances in Neural
  Information Processing Systems}, volume~32. Curran Associates, Inc., 2019.
\newblock URL
  \url{https://proceedings.neurips.cc/paper/2019/file/71ee911dd06428a96c143a0b135041a4-Paper.pdf}.

\bibitem[Chung(1997)]{chung-spectral}
F.~R.~K. Chung.
\newblock \emph{Spectral Graph Theory}.
\newblock American Mathematical Society, 1997.

\bibitem[Coifman \& Lafon(2006)Coifman and Lafon]{coifman-diffusion}
Ronald~R. Coifman and St{\'e}phane Lafon.
\newblock Diffusion maps.
\newblock \emph{Applied and Computational Harmonic Analysis}, 21\penalty0
  (1):\penalty0 5--30, 2006.
\newblock ISSN 1063-5203.
\newblock \doi{https://doi.org/10.1016/j.acha.2006.04.006}.
\newblock URL
  \url{https://www.sciencedirect.com/science/article/pii/S1063520306000546}.
\newblock Special Issue: Diffusion Maps and Wavelets.

\bibitem[Davis(1979)]{davis-circulant}
Philip~J. Davis.
\newblock \emph{Circulant Matrices}.
\newblock Wiley, New York, 1979.
\newblock ISBN 0-471-05771-1.

\bibitem[DeBenedetto \& Chiang(2020)DeBenedetto and
  Chiang]{debenedetto+chiang:icml2020}
Justin DeBenedetto and David Chiang.
\newblock Representing unordered data using complex-weighted multiset automata.
\newblock In \emph{Proc. ICML}, volume 119 of \emph{Proceedings of Machine
  Learning Research}, pp.\  2412--2420, 2020.

\bibitem[Dwivedi \& Bresson(2021)Dwivedi and Bresson]{dwivedi-gtn}
Vijay~Prakash Dwivedi and Xavier Bresson.
\newblock A generalization of transformer networks to graphs.
\newblock \emph{AAAI Workshop on Deep Learning on Graphs: Methods and
  Applications}, 2021.

\bibitem[Dwivedi et~al.(2020)Dwivedi, Joshi, Laurent, Bengio, and
  Bresson]{dwivedi-benchmarking}
Vijay~Prakash Dwivedi, Chaitanya~K. Joshi, Thomas Laurent, Yoshua Bengio, and
  Xavier Bresson.
\newblock Benchmarking graph neural networks.
\newblock \emph{CoRR}, abs/2003.00982, 2020.
\newblock URL \url{https://arxiv.org/abs/2003.00982}.

\bibitem[Dwivedi et~al.(2022)Dwivedi, Luu, Laurent, Bengio, and
  Bresson]{dwivedi-lspe}
Vijay~Prakash Dwivedi, Anh~Tuan Luu, Thomas Laurent, Yoshua Bengio, and Xavier
  Bresson.
\newblock Graph neural networks with learnable structural and positional
  representations.
\newblock \emph{International Conference on Learning Representations}, 2022.

\bibitem[Fiedler(1973)]{fiedler}
Miroslav Fiedler.
\newblock Algebraic connectivity of graphs.
\newblock \emph{Czechoslovak Mathematical Journal}, 23\penalty0 (2):\penalty0
  298--305, 1973.
\newblock URL \url{http://eudml.org/doc/12723}.

\bibitem[Horn \& Johnson(2013)Horn and Johnson]{horn+johnson:2013}
Roger~A. Horn and Charles~A. Johnson.
\newblock \emph{Matrix Analysis}.
\newblock Cambridge Univ. Press, 2nd edition, 2013.

\bibitem[Hu et~al.(2021)Hu, Fey, Ren, Nakata, Dong, and Leskovec]{ogblsc}
Weihua Hu, Matthias Fey, Hongyu Ren, Maho Nakata, Yuxiao Dong, and Jure
  Leskovec.
\newblock {OGB}-{LSC}: A large-scale challenge for machine learning on graphs.
\newblock In \emph{Thirty-fifth Conference on Neural Information Processing
  Systems Datasets and Benchmarks Track (Round 2)}, 2021.
\newblock URL \url{https://openreview.net/forum?id=qkcLxoC52kL}.

\bibitem[Huang et~al.(2019)Huang, Vaswani, Uszkoreit, Simon, Hawthorne,
  Shazeer, Dai, Hoffman, Dinculescu, and Eck]{Huang2019MusicTG}
Cheng-Zhi~Anna Huang, Ashish Vaswani, Jakob Uszkoreit, Ian Simon, Curtis
  Hawthorne, Noam Shazeer, Andrew~M. Dai, Matthew~D. Hoffman, Monica
  Dinculescu, and Douglas Eck.
\newblock Music transformer: Generating music with long-term structure.
\newblock In \emph{International Conference on Learning Representations}, 2019.
\newblock URL \url{https://openreview.net/forum?id=rJe4ShAcF7}.

\bibitem[Irwin et~al.(2012)Irwin, Sterling, Mysinger, Bolstad, and
  Coleman]{irwin-zinc}
John~J. Irwin, Teague Sterling, Michael~M. Mysinger, Erin~S. Bolstad, and
  Ryan~G. Coleman.
\newblock Zinc: A free tool to discover chemistry for biology.
\newblock \emph{Journal of Chemical Information and Modeling}, 52\penalty0
  (7):\penalty0 1757--1768, 2012.
\newblock \doi{10.1021/ci3001277}.
\newblock URL \url{https://doi.org/10.1021/ci3001277}.
\newblock PMID: 22587354.

\bibitem[Johnson et~al.(2020)Johnson, Larochelle, and Tarlow]{johnson-gfsa}
Daniel Johnson, Hugo Larochelle, and Daniel Tarlow.
\newblock Learning graph structure with a finite-state automaton layer.
\newblock In H.~Larochelle, M.~Ranzato, R.~Hadsell, M.F. Balcan, and H.~Lin
  (eds.), \emph{Advances in Neural Information Processing Systems}, volume~33,
  pp.\  3082--3093. Curran Associates, Inc., 2020.
\newblock URL
  \url{https://proceedings.neurips.cc/paper/2020/file/1fdc0ee9d95c71d73df82
  ac8f0721459-Paper.pdf}.

\bibitem[Kingma \& Ba(2015)Kingma and Ba]{kingma-adam}
Diederik~P. Kingma and Jimmy Ba.
\newblock Adam: {A} method for stochastic optimization.
\newblock In Yoshua Bengio and Yann LeCun (eds.), \emph{3rd International
  Conference on Learning Representations, {ICLR} 2015, San Diego, CA, USA, May
  7-9, 2015, Conference Track Proceedings}, 2015.
\newblock URL \url{http://arxiv.org/abs/1412.6980}.

\bibitem[Kreuzer et~al.(2021)Kreuzer, Beaini, Hamilton, L\'{e}tourneau, and
  Tossou]{kreuzer-san}
Devin Kreuzer, Dominique Beaini, Will Hamilton, Vincent L\'{e}tourneau, and
  Prudencio Tossou.
\newblock Rethinking graph transformers with spectral attention.
\newblock In M.~Ranzato, A.~Beygelzimer, Y.~Dauphin, P.S. Liang, and J.~Wortman
  Vaughan (eds.), \emph{Advances in Neural Information Processing Systems},
  volume~34, pp.\  21618--21629. Curran Associates, Inc., 2021.
\newblock URL
  \url{https://proceedings.neurips.cc/paper/2021/file/b4fd1d2cb085390fbbadae65e07876a7-Paper.pdf}.

\bibitem[Kunc \& Okhotin(2013)Kunc and Okhotin]{kunc+okhotin:2013}
Michal Kunc and Alexander Okhotin.
\newblock Reversibility of computations in graph-walking automata.
\newblock In \emph{Proc. Mathematical Foundations of Computer Science}, volume
  8087 of \emph{LNCS}, pp.\  595--606, 2013.
\newblock \doi{10.1007/978-3-642-40313-2_53}.

\bibitem[Kuratowski(1930)]{Kuratowski1930}
Casimir Kuratowski.
\newblock Sur le problème des courbes gauches en topologie.
\newblock \emph{Fundamenta Mathematicae}, 15\penalty0 (1):\penalty0 271--283,
  1930.
\newblock URL \url{http://eudml.org/doc/212352}.

\bibitem[Li et~al.(2020)Li, Wang, Wang, and Leskovec]{li-degnn}
Pan Li, Yanbang Wang, Hongwei Wang, and Jure Leskovec.
\newblock Distance encoding: Design provably more powerful neural networks for
  graph representation learning.
\newblock \emph{Advances in Neural Information Processing Systems},
  33:\penalty0 4465--4478, 2020.

\bibitem[Loukas(2020)]{loukas-what}
Andreas Loukas.
\newblock What graph neural networks cannot learn: depth vs width.
\newblock In \emph{International Conference on Learning Representations}, 2020.
\newblock URL \url{https://openreview.net/forum?id=B1l2bp4YwS}.

\bibitem[Mialon et~al.(2021)Mialon, Chen, Selosse, and Mairal]{mialon-graphit}
Gr{\'{e}}goire Mialon, Dexiong Chen, Margot Selosse, and Julien Mairal.
\newblock {G}raphi{T}: Encoding graph structure in transformers.
\newblock \emph{CoRR}, abs/2106.05667, 2021.
\newblock URL \url{https://arxiv.org/abs/2106.05667}.

\bibitem[Murphy et~al.(2019)Murphy, Srinivasan, Rao, and
  Ribeiro]{murphy-relpool}
Ryan Murphy, Balasubramaniam Srinivasan, Vinayak Rao, and Bruno Ribeiro.
\newblock Relational pooling for graph representations.
\newblock In Kamalika Chaudhuri and Ruslan Salakhutdinov (eds.),
  \emph{Proceedings of the 36th International Conference on Machine Learning},
  volume~97 of \emph{Proceedings of Machine Learning Research}, pp.\
  4663--4673, Long Beach, California, USA, 09--15 Jun 2019. PMLR.
\newblock URL \url{http://proceedings.mlr.press/v97/murphy19a.html}.

\bibitem[Page et~al.(1998)Page, Brin, Motwani, and Winograd]{pagerank}
Lawrence Page, Sergey Brin, Rajeev Motwani, and Terry Winograd.
\newblock {The {P}age{R}ank Citation Ranking: Bringing Order to the Web}.
\newblock Technical report, Stanford Digital Library Technologies Project,
  1998.
\newblock URL
  \url{http://citeseerx.ist.psu.edu/viewdoc/summary?doi=10.1.1.31.1768}.

\bibitem[Park et~al.(2022)Park, Choo, and Park]{park2022convergent}
Junyoung Park, Jinhyun Choo, and Jinkyoo Park.
\newblock Convergent graph solvers.
\newblock In \emph{International Conference on Learning Representations}, 2022.
\newblock URL \url{https://openreview.net/forum?id=ItkxLQU01lD}.

\bibitem[Petersen \& Pedersen(2012)Petersen and Pedersen]{matrixcookbook}
Kaare~Brandt Petersen and Michael~Syskind Pedersen.
\newblock The matrix cookbook, November 2012.
\newblock URL \url{http://www2.compute.dtu.dk/pubdb/pubs/3274-full.html}.

\bibitem[Saxe et~al.(2014)Saxe, McClelland, and Ganguli]{saxe:2014}
Andrew~M. Saxe, James~L. McClelland, and Surya Ganguli.
\newblock Exact solutions to the nonlinear dynamics of learning in deep linear
  neural networks.
\newblock In \emph{International Conference on Learning Representations}, 2014.

\bibitem[Scarselli et~al.(2009)Scarselli, Gori, Tsoi, Hagenbuchner, and
  Monfardini]{2009-gnn}
Franco Scarselli, Marco Gori, Ah~Chung Tsoi, Markus Hagenbuchner, and Gabriele
  Monfardini.
\newblock The graph neural network model.
\newblock \emph{IEEE Transactions on Neural Networks}, 20\penalty0
  (1):\penalty0 61--80, 2009.
\newblock \doi{10.1109/TNN.2008.2005605}.

\bibitem[Shaw et~al.(2018)Shaw, Uszkoreit, and Vaswani]{shaw-etal-2018-self}
Peter Shaw, Jakob Uszkoreit, and Ashish Vaswani.
\newblock Self-attention with relative position representations.
\newblock In \emph{Proceedings of the 2018 Conference of the North {A}merican
  Chapter of the Association for Computational Linguistics: Human Language
  Technologies, Volume 2 (Short Papers)}, pp.\  464--468, New Orleans,
  Louisiana, June 2018. Association for Computational Linguistics.
\newblock \doi{10.18653/v1/N18-2074}.
\newblock URL \url{https://aclanthology.org/N18-2074}.

\bibitem[Shi \& Malik(2000)Shi and Malik]{Shi2000}
Jianbo Shi and Jitendra Malik.
\newblock Normalized cuts and image segmentation.
\newblock \emph{IEEE Trans. Pattern Anal. Mach. Intell.}, 22\penalty0
  (8):\penalty0 888--905, August 2000.
\newblock ISSN 0162-8828.
\newblock \doi{10.1109/34.868688}.
\newblock URL \url{http://dx.doi.org/10.1109/34.868688}.

\bibitem[Vaswani et~al.(2017)Vaswani, Shazeer, Parmar, Uszkoreit, Jones, Gomez,
  Kaiser, and Polosukh~in]{vaswani-attention}
Ashish Vaswani, Noam Shazeer, Niki Parmar, Jakob Uszkoreit, Llion Jones,
  Aidan~N Gomez, \L{ukasz} Kaiser, and Illia Polosukh~in.
\newblock Attention is all you need.
\newblock In I.~Guyon, U.~Von Luxburg, S.~Bengio, H.~Wallach, R.~F ergus,
  S.~Vishwanathan, and R.~Garnett (eds.), \emph{Advances in Neural Information
  Processing Systems}, volume~30. Curran Associates, Inc., 2017.
\newblock URL
  \url{https://proceedings.neurips.cc/paper/2017/file/3f5ee243547dee91fbd05
  3c1c4a845aa-Paper.pdf}.

\bibitem[Virtanen et~al.(2020)Virtanen, Gommers, Oliphant, Haberland, Reddy,
  Cournapeau, Burovski, Peterson, Weckesser, Bright, {van der Walt}, Brett,
  Wilson, Millman, Mayorov, Nelson, Jones, Kern, Larson, Carey, Polat, Feng,
  Moore, {VanderPlas}, Laxalde, Perktold, Cimrman, Henriksen, Quintero, Harris,
  Archibald, Ribeiro, Pedregosa, {van Mulbregt}, and {SciPy 1.0
  Contributors}]{scipy}
Pauli Virtanen, Ralf Gommers, Travis~E. Oliphant, Matt Haberland, Tyler Reddy,
  David Cournapeau, Evgeni Burovski, Pearu Peterson, Warren Weckesser, Jonathan
  Bright, St{\'e}fan~J. {van der Walt}, Matthew Brett, Joshua Wilson, K.~Jarrod
  Millman, Nikolay Mayorov, Andrew R.~J. Nelson, Eric Jones, Robert Kern, Eric
  Larson, C~J Carey, {\.I}lhan Polat, Yu~Feng, Eric~W. Moore, Jake
  {VanderPlas}, Denis Laxalde, Josef Perktold, Robert Cimrman, Ian Henriksen,
  E.~A. Quintero, Charles~R. Harris, Anne~M. Archibald, Ant{\^o}nio~H. Ribeiro,
  Fabian Pedregosa, Paul {van Mulbregt}, and {SciPy 1.0 Contributors}.
\newblock {{SciPy} 1.0: Fundamental Algorithms for Scientific Computing in
  Python}.
\newblock \emph{Nature Methods}, 17:\penalty0 261--272, 2020.
\newblock \doi{10.1038/s41592-019-0686-2}.

\bibitem[Wagner(1937)]{wagner}
K.~Wagner.
\newblock {\"U}ber eine eigenschaft der ebenen komplexe.
\newblock \emph{Mathematische Annalen}, 114\penalty0 (1):\penalty0 570--590,
  1937.
\newblock \doi{10.1007/BF01594196}.
\newblock URL \url{https://doi.org/10.1007/BF01594196}.

\bibitem[Weisfeiler \& Leman(1968)Weisfeiler and Leman]{weisfeiler-lehman}
B.Yu. Weisfeiler and A.A. Leman.
\newblock The reduction of a graph to canonical form and the algebra which
  appears therein.
\newblock \emph{Nauchno-Technicheskaya Informatsiya}, 2\penalty0 (9):\penalty0
  12--16, 1968.
\newblock Translation from Russian to English by Grigory Ryabov.

\bibitem[Xu et~al.(2019)Xu, Hu, Leskovec, and Jegelka]{xu2018how}
Keyulu Xu, Weihua Hu, Jure Leskovec, and Stefanie Jegelka.
\newblock How powerful are graph neural networks?
\newblock In \emph{International Conference on Learning Representations}, 2019.
\newblock URL \url{https://openreview.net/forum?id=ryGs6iA5Km}.

\bibitem[Ying et~al.(2021)Ying, Cai, Luo, Zheng, Ke, He, Shen, and
  Liu]{ying-graphormer}
Chengxuan Ying, Tianle Cai, Shengjie Luo, Shuxin Zheng, Guolin Ke, Di~He,
  Yanming Shen, and Tie-Yan Liu.
\newblock Do {T}ransformers really perform badly for graph representation?
\newblock In M.~Ranzato, A.~Beygelzimer, Y.~Dauphin, P.S. Liang, and J.~Wortman
  Vaughan (eds.), \emph{Advances in Neural Information Processing Systems},
  volume~34, pp.\  28877--28888. Curran Associates, Inc., 2021.
\newblock URL
  \url{https://proceedings.neurips.cc/paper/2021/file/f1c1592588411002af340cbaedd6fc33-Paper.pdf}.

\bibitem[Zhang et~al.(2016)Zhang, Lofgren, and Goel]{zhang-ppr}
Hongyang Zhang, Peter Lofgren, and Ashish Goel.
\newblock Approximate personalized pagerank on dynamic graphs.
\newblock In \emph{Proceedings of the 22nd ACM SIGKDD International Conference
  on Knowledge Discovery and Data Mining}, KDD '16, pp.\  1315–1324, New
  York, NY, USA, 2016. Association for Computing Machinery.
\newblock ISBN 9781450342322.
\newblock \doi{10.1145/2939672.2939804}.
\newblock URL \url{https://doi.org/10.1145/2939672.2939804}.

\bibitem[Zhang et~al.(2019)Zhang, Tong, Xu, and
  Maciejewski]{zhang-spectral-spatial}
Si~Zhang, Hanghang Tong, Jiejun Xu, and Ross Maciejewski.
\newblock Graph convolutional networks: a comprehensive review.
\newblock \emph{Computational Social Networks}, 6\penalty0 (1):\penalty0 11,
  2019.
\newblock \doi{10.1186/s40649-019-0069-y}.
\newblock URL \url{https://doi.org/10.1186/s40649-019-0069-y}.

\bibitem[Zhang et~al.(2021)Zhang, He, Brugnone, Perlmutter, and
  Hirn]{zhang-magnet}
Xitong Zhang, Yixuan He, Nathan Brugnone, Michael Perlmutter, and Matthew Hirn.
\newblock {M}ag{N}et: A neural network for directed graphs.
\newblock In M.~Ranzato, A.~Beygelzimer, Y.~Dauphin, P.S. Liang, and J.~Wortman
  Vaughan (eds.), \emph{Advances in Neural Information Processing Systems},
  volume~34, pp.\  27003--27015. Curran Associates, Inc., 2021.
\newblock URL
  \url{https://proceedings.neurips.cc/paper/2021/file/e32084632d369461572832e6582aac36-Paper.pdf}.

\end{thebibliography}
\bibliographystyle{tmlr}


\end{document}